\theoremstyle{plain}
\newtheorem{theorem}{Theorem}[section]
\newtheorem{corollary}[theorem]{Corollary}
\newtheorem{proposition}[theorem]{Proposition}
\theoremstyle{definition}
\theoremstyle{remark}
\numberwithin{equation}{section}
\let\c\mathcal
\let\bb\mathbb
\DeclareMathOperator*{\logsumexp}{logsumexp}
\DeclareMathOperator*{\expectation}{\bb E}
\def\given{\;|\;}
\def\d{\;\mathrm{d}}
\newcommand{\set}[1]{\left\{#1\right\}}
\newcounter{eqcounter}[enumi]
\newcommand{\eq}{\stepcounter{eqcounter}\overset{\text{(\alph{eqcounter})}}}
\def\ito{It\^{o}}
\begin{document}

\runningauthor{Boserup, Yang, Severinsen, Hipsley \& Sommer}

\twocolumn[

\styletitle{Parameter Inference via Differentiable Diffusion Bridge Importance Sampling}

\styleauthor{
    Nicklas Boserup\\
    Department of Computer Science\\
    University of Copenhagen
    \And
    Gefan Yang\\
    Department of Computer Science\\
    University of Copenhagen
    \And
    Michael Lind Severinsen\\
    Globe Institute\\
    University of Copenhagen
    \AND
    Christy Anna Hipsley\\
    Department of Biology\\
    University of Copenhagen
    \And
    Stefan Sommer\\
    Department of Computer Science\\
    University of Copenhagen
}

\styleaddress{}

\vspace*{1em}
]

\begin{abstract}
We introduce a methodology for performing parameter inference in high-dimensional, non-linear diffusion processes. We illustrate its applicability for obtaining insights into the evolution of and relationships between species, including ancestral state reconstruction.
Estimation is performed by utilising score matching to approximate diffusion bridges, which are subsequently used in an importance sampler to estimate log-likelihoods. The entire setup is differentiable, allowing gradient ascent on approximated log-likelihoods. This allows both parameter inference and diffusion mean estimation.
This novel, numerically stable, score matching-based parameter inference framework is presented and demonstrated on biological two- and three-dimensional morphometry data.
\end{abstract}

\section{INTRODUCTION}
Parameter inference from low-frequency observations in models involving non-linear stochastic differential equations with hundreds of correlated dimensions is inherently difficult due to lack of closed-form likelihoods and because of ill-conditioned numerics. In this paper, we combine deep learning-based score matching with statistical methods for parameter estimation in diffusion models to enable this. Specifically, we propose
\begin{itemize}
    \item a novel, numerically stable objective function for deep learning-based score matching, enabling direct simulation of diffusion bridges in hundreds of correlated dimensions,
    \item a fully differentiable likelihood estimator, utilising simulated diffusion bridges as proposals in a sample-efficient importance sampler, allowing parameter inference as well as diffusion mean estimation, and
    \item a series of techniques for circumventing numerical instability issues, enabling parameter inference via multivariate Gaussian approximations with neither determinant calculations nor matrix inversions.
\end{itemize}

The proposed methodology is demonstrated on problems in evolutionary biology, where non-linear and high-dimensional processes describing evolving shapes of species occur naturally.
This allows modelling morphological trait variation among species; e.g.\ establishing the most likely process from an unknown common ancestor given observations of extant species phenotypes, such as the landmark wing outlines of the two butterflies in Figure~\ref{fig:butterfly_landmarks}.

\begin{figure}[ht]
    \centering
    \includegraphics[width=0.4\linewidth]{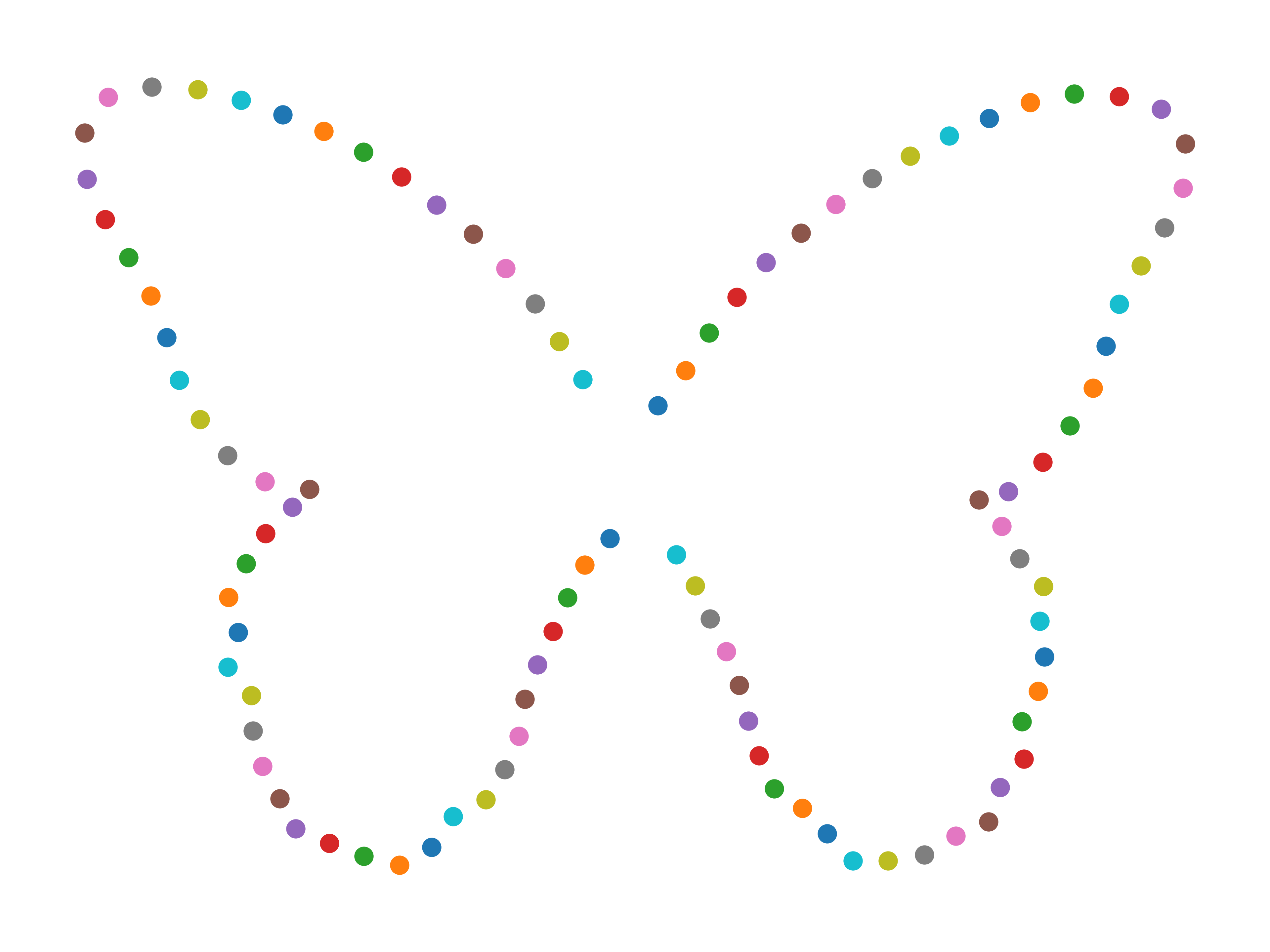}%
    \includegraphics[width=0.4\linewidth]{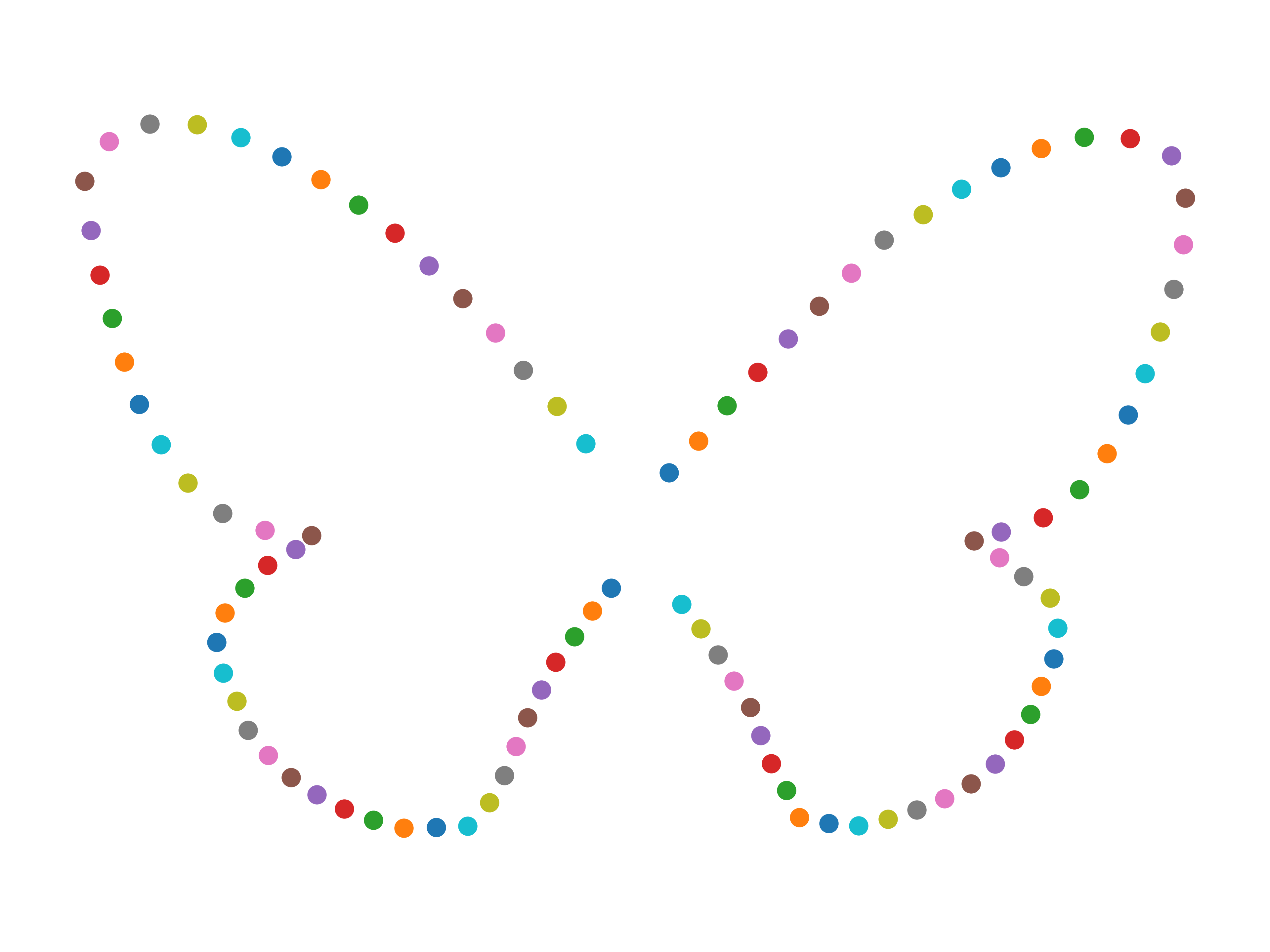}%
    \caption{100 landmarks describing the wing outlines of \emph{Papilio ambrax} and \emph{Papilio slateri}, respectively. Data from \textcite{gbif_m}.}
    \label{fig:butterfly_landmarks}
\end{figure}

Implementation of proposed methodology available at \url{https://github.com/nickeopti/msc-thesis-code}.

\section{STOCHASTIC MORPHOMETRY}
Numerous works, including \textcite{sommer2017bridge}, \textcite{e78e103697e2476a8074104b2fb45398}, \textcite{Arnaudon_2018}, and \textcite{Arnaudon_2022}, examine stochastic \emph{landmark shapes}. This work follows these, where landmarks are ordered sets $q = \set{x_i}_{i=1}^n$ of points in some underlying domain $\Omega$, here $\bb R^2$ or $\bb R^3$. The domain, along with landmarks within it, may be \emph{deformed} by diffeomorphic actions. See e.g.\ \textcite[chapter 4]{ff5d0f0a23c84a18a4cf60bdf8efe1aa} for details.
Endowing the underlying domain with a kernel allows construction of stochastic diffeomorphisms. Here, kernels of the type $K(x, y) = k(x, y) I_{d \times d}$ for a scalar kernel function $k$ are assumed.

Crucially, for landmark shapes, it suffices to evaluate these in the landmarks themselves.
\textcite[chapter 4.2]{alma99122437615805763} formalise this; ensuring that the landmarks move according to a stochastic flow of diffeomorphisms suffices. That may be ensured by the following proposition.

\begin{proposition}[Stochastic Flow of Landmarks]\label{prop:landmark_flow}
    Let $q = \set{x_i}_{i=1}^n \subseteq \Omega \subseteq \bb R^d$ be a landmark shape and $K$ be a kernel. Let $W_t$ be a collection of $n$ independent $d$-dimensional Wiener processes. Define the collection of stochastic processes
    \begin{equation}\label{eq:landmark_flow}
        \d X_t^i = \sum_{j=1}^n K(X_t^i, X_t^j) \d W_t^j, \quad X_s^i = x_i, t \geq s
    \end{equation}
    where superscripts index the processes; one for each landmark. Notice how each process operates on a single point in $\Omega$. As such, these may be lifted to a stochastic flow on $\Omega$.
\end{proposition}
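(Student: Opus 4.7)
The plan is to construct the lifted flow explicitly by extending the finite-dimensional SDE in \eqref{eq:landmark_flow} to an SDE on all of $\Omega$ driven by the \emph{same} family of Wiener processes $\set{W_t^j}_{j=1}^n$, and then appeal to standard results on stochastic flows of diffeomorphisms (e.g.\ the Kunita-type theorems referenced in \textcite[chapter 4.2]{alma99122437615805763}) to conclude that the extension is a diffeomorphism-valued flow whose evaluation at the $n$ landmark positions coincides with the $X_t^i$'s.

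First, I would define, for every $x \in \Omega$, the process
\begin{equation*}
    \d \Phi_t(x) = \sum_{j=1}^n K\bigl(\Phi_t(x), X_t^j\bigr) \d W_t^j, \qquad \Phi_s(x) = x,
\end{equation*}
where the $X_t^j = \Phi_t(x_j)$ are the landmark trajectories. The point is that this single SDE, parametrised by the initial condition $x$, agrees with \eqref{eq:landmark_flow} when $x = x_i$ (by pathwise uniqueness and the fact that the drift/diffusion coefficients depend on $x$ only through the kernel evaluation at the landmarks, which are themselves solutions of the same SDE). Next, I would verify that the coefficients $x \mapsto K(x, y)$ inherit from the assumed regularity of the scalar kernel $k$ the standard conditions (Lipschitz continuity with bounded derivatives of sufficiently high order, linear growth) needed to apply Kunita's theorem on stochastic flows. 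Under these hypotheses the SDE admits a jointly continuous solution $(t, x) \mapsto \Phi_t(x)$ that, for each fixed $t \geq s$, is a $C^k$-diffeomorphism of $\Omega$ almost surely.

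The main obstacle is the regularity verification: strictly speaking, the SDE above has coefficients that depend on the global behaviour of $\Phi_t$ through the landmark positions $X_t^j$, so it is not a classical SDE with coefficients depending only on the point $x$. I would handle this by first solving the closed finite-dimensional system \eqref{eq:landmark_flow} for $\set{X_t^j}_{j=1}^n$ (well-posedness of this system follows from Lipschitz continuity of $K$ under our assumptions), and \emph{then}, conditionally on the realised landmark paths, interpreting the equation for $\Phi_t(x)$ as an SDE in $x$ with random but now $x$-independent coefficients $y \mapsto K(\cdot, y)$ driven by fixed Brownian motions. With the landmark paths frozen, Kunita's results apply directly and deliver the flow property and the diffeomorphism property.

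Finally, I would tie the construction together by observing that $\Phi_t(x_i) = X_t^i$ by pathwise uniqueness, so the lifted flow $\Phi_t$ indeed extends the landmark dynamics to all of $\Omega$; composition and inverse properties of $\Phi_t$ then give the full stochastic flow structure of \textcite[chapter 4.2]{alma99122437615805763}. I expect the technical heart of the argument to be checking the moment and derivative bounds required by Kunita's theorem for the specific radial kernels $K(x,y) = k(x,y)\, I_{d \times d}$ used in practice; for the standard choice (e.g.\ Gaussian $k$) these are immediate, which is why the proposition is stated without further hypotheses on $k$.
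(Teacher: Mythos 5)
Your proposal takes essentially the same route the paper intends: the paper offers no standalone proof, deferring to the cited reference (chapter 4.2), and the justification is precisely that the landmark system \eqref{eq:landmark_flow} is closed and each process acts on a single point of $\Omega$, so the same kernel-driven SDE with the realised landmark trajectories frozen as random, adapted coefficients extends to every $x \in \Omega$, and Kunita-type flow theorems yield a flow of diffeomorphisms whose evaluation at $x_i$ recovers $X_t^i$ by pathwise uniqueness. Your decoupling step (solve the finite-dimensional system first, then treat the lift as an SDE in $x$ with time-dependent random coefficients) and the smoothness requirements on $k$ that you flag are exactly the standard formalisation the paper implicitly assumes, so there is nothing to correct.
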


This collection of processes may conveniently be stacked into a multivariate stochastic process, expressed as the \ito{} form stochastic differential equation
\begin{equation}\label{eq:vector_landmark_flow}
    \d X_{t,\ell} = \sigma(X_t) \d W_{t,\ell}
\end{equation}
with
\begin{equation}
    \sigma(X_t)^{i,j} = k(X_t^i, X_t^j)
\end{equation}
where the $\ell = 1,\dots,d$ subscript indicates the component of each landmark, while the superscripts index the landmarks (in the stacked representation). This is the process considered in this work, an instance of the \emph{Kunita flow} family of processes. Notice that it is a non-linear diffusion process, which for many (potentially several hundreds of) landmarks becomes rather high-dimensional.

This model is desirable, as it adheres to the reasonable assumption that changes occur according the \emph{current} state, rather than e.g.\ the initial state. Unfortunately, transition densities are unknown.

\section{DIFFUSION BRIDGES}
This work employs \emph{diffusion bridges} --- diffusion processes conditioned to hit a specific value at a given time --- as proposals in an importance sampler. To describe that process, as well as how to approximate such processes, some background will be briefly presented here.

\subsection{Bridges}
The fundamental tool to express arbitrary diffusion bridges is Doob's $h$-transform.

\begin{theorem}[Doob's $h$-transform] \label{thm:doobs-h-transform}
    Given a diffusion\footnote{Doob's $h$-transform applies in greater generality than what is necessary in this work.} process $\{X_t\}_{0 \leq t \leq T}$, a new diffusion process $\{X_t^\star\}_{0 \leq t \leq T}$, which is conditioned to hit some value $X_T^\star$ at time $T$, can be constructed, and has \ito{} form SDE
    \begin{equation} \label{eq:diffusion_bridge_general}
        \d X_t^\star = f^\star(X_t^\star) \d t + \sigma(X_t^\star) \d W_t
    \end{equation}
    where
    \begin{equation}
        f^\star(X_t^\star) = f(X_t^\star) + \Sigma(X_t^\star) \nabla \log p(X_T^\star \given X_t^\star)
    \end{equation}
    and
    \begin{equation} \label{eq:Sigma_def}
        \Sigma(X_t^\star) = \sigma(X_t^\star) \sigma(X_t^\star)^\top.
    \end{equation}
    Such diffusion processes conditioned to end at a specific value at a specific time point are called \emph{diffusion bridges}. Note that $p$ is the transition density of the original, unconditioned process $X_t$.
\end{theorem}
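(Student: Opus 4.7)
The plan is to follow the classical derivation via the Kolmogorov backward equation and Girsanov's theorem. Introduce the \emph{$h$-function} $h(t, x) := p(X_T^\star \given X_t = x)$, i.e.\ the transition density of the unconditioned process evaluated at the prescribed endpoint. The goal is to identify $h(t, X_t)/h(0, X_0)$ as the Radon--Nikodym density of the bridge law with respect to the unconditioned law on $\c F_t$ for each $t < T$, and then to read off the corrected drift via Girsanov.

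First I would recall that, under standard regularity assumptions on $f$ and $\sigma$, $h$ satisfies the backward Kolmogorov equation $\partial_t h + \c L h = 0$, where $\c L g = \inner{f}{\nabla g} + \tfrac{1}{2} \operatorname{tr}(\Sigma \nabla^2 g)$ is the generator of $X_t$. Applying \ito{}'s formula to $\log h(t, X_t)$ yields
\begin{equation}
    \d \log h(t, X_t) = \frac{\partial_t h + \c L h}{h} \d t + (\nabla \log h)^\top \sigma \d W_t - \frac{1}{2} (\nabla \log h)^\top \Sigma \nabla \log h \d t,
\end{equation}
and the drift term collapses by the backward equation. Exponentiating identifies $h(t, X_t)/h(0, X_0)$ as the Dol\'eans exponential of the local martingale $M_t := \int_0^t (\nabla \log h(s, X_s))^\top \sigma(X_s) \d W_s$.

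The next step is to recognise this exponential as precisely the density of the bridge measure. This follows from Bayes' rule applied to the joint law of $(X_t, X_T)$: for any $A \in \c F_t$,
\begin{equation}
    \prob(A \given X_T = X_T^\star) = \expectation\!\left[\indicator_A \frac{p(X_T^\star \given X_t)}{p(X_T^\star \given X_0)}\right].
\end{equation}
Girsanov's theorem then gives that $\tilde W_t := W_t - \int_0^t \sigma(X_s)^\top \nabla \log h(s, X_s) \d s$ is Brownian under the new measure. Substituting $\d W_t = \d \tilde W_t + \sigma^\top \nabla \log h \d t$ into the original SDE produces the extra drift $\sigma \sigma^\top \nabla \log h = \Sigma(X_t^\star) \nabla \log p(X_T^\star \given X_t^\star)$, which together with $f$ yields~\eqref{eq:diffusion_bridge_general}.

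The hard part will be justifying these formal manipulations for the non-linear, state-dependent Kunita flow of interest: one must verify existence and sufficient smoothness of the transition density $p$, strict positivity of $h$ on $[0, T) \times \Omega$, and an integrability condition (typically Novikov's) to legitimise the Girsanov change of measure. The singularity of $\log h$ as $t \uparrow T$ is usually handled by working on $[0, T - \epsilon]$ and taking $\epsilon \downarrow 0$. In the stated generality these issues are resolvable under standard sufficient conditions from the $h$-transform literature, to which the footnote tacitly defers.
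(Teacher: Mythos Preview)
Your argument is the standard derivation and is correct. The paper itself does not supply a proof of this theorem; it simply refers the reader to S\"arkk\"a and Solin (2019, chapter~7.5). What you have outlined --- backward Kolmogorov for $h$, \ito{} on $\log h$ to identify $h(t,X_t)/h(0,X_0)$ as the Dol\'eans exponential, Bayes' rule plus the Markov property to recognise this as the bridge density on $\c F_t$, and Girsanov to extract the drift correction $\Sigma\nabla\log h$ --- is essentially the textbook route that reference follows, so there is nothing substantive to compare against. Your closing remarks about regularity, positivity of $h$, Novikov, and the $t\uparrow T$ cutoff are the right caveats and are exactly what the paper's footnote and citation are sweeping under the rug.
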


Refer to e.g.\ \textcite[chapter 7.5]{Sarkka_Solin_2019} for a proof.

This introduces a dependency on the logarithm of a transition density, which is generally --- and certainly in the context of Kunita flows --- unknown. Fortunately, \textcite{heng22} recently introduced a way to approximate the \emph{reverse time} diffusion bridge, which suffices for the methods presented in this work.

\begin{theorem}[Time Reversal] \label{thm:time-reversal}
    A diffusion process $\{X_t\}_{0 \leq t \leq T}$ has, under some mild regularity conditions, a \emph{reverse time} diffusion process $\{\bar X_t\}_{0 \leq t \leq T}$ given by the SDE
    \begin{equation} \label{eq:reverse_time}
        \d \bar X_t = \bar f(\bar X_t, t) \d t + \sigma(\bar X_t, t) \d \bar W_t
    \end{equation}
    with a new Wiener process $\{\bar W_t\}_{t \in [0, T]}$, the same diffusion term $\sigma$, and a new drift term
    \begin{equation} \label{eq:reverse_time_drift_alternative}
        \begin{split}
            \bar f(\bar X_t, t) = f(\bar X_t, t) &- \nabla \cdot \Sigma(\bar X_t, t) \\
            &- \Sigma(\bar X_t, t) \nabla \log p(\bar X_t \given \bar X_0).
        \end{split}
    \end{equation}
\end{theorem}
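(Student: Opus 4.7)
The plan is to follow the classical Anderson / Haussmann--Pardoux strategy: derive a candidate drift for the reverse process at the level of the Fokker--Planck equation, and then upgrade this PDE-level statement to a genuine SDE representation via a martingale argument.

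First, let $p(\cdot, t) = p(\cdot, t \mid \bar X_0)$ denote the transition density of the forward process started from $\bar X_0$. It satisfies the Kolmogorov forward equation
\begin{equation*}
    \partial_t p = -\nabla \cdot (f p) + \tfrac{1}{2} \sum_{i,j} \partial_i \partial_j (\Sigma_{ij} p).
\end{equation*}
Set $\bar X_t := X_{T-t}$ with density $\bar p(x,t) := p(x, T-t)$, so that the chain rule together with the forward equation yields
\begin{equation*}
    \partial_t \bar p = \nabla \cdot (f \bar p) - \tfrac{1}{2} \nabla \cdot \nabla \cdot (\Sigma \bar p).
\end{equation*}

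Second, perform the key algebraic manipulation on the second-order divergence. Using the product rule together with the identity $\nabla \bar p = \bar p \, \nabla \log \bar p$, one can rewrite
\begin{equation*}
    \nabla \cdot \nabla \cdot (\Sigma \bar p) = \nabla \cdot \bigl[(\nabla \cdot \Sigma + \Sigma \nabla \log \bar p)\, \bar p\bigr],
\end{equation*}
so that $\partial_t \bar p$ takes the standard Fokker--Planck form $-\nabla \cdot (\bar f \bar p) + \tfrac{1}{2}\nabla \cdot \nabla \cdot (\Sigma \bar p)$. Reading off $\bar f$ by matching coefficients then produces the expression stated in the theorem.

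The main obstacle, and the step that upgrades this PDE-level matching to a bona fide SDE with an identifiable reverse Wiener process $\bar W$ adapted to the backward filtration, is the martingale construction. I would define $\bar M_t := \bar X_t - \bar X_0 - \int_0^t \bar f(\bar X_s, s) \d s$, show via \ito{} calculus and the conditional-expectation characterisation of martingales in the reverse filtration that $\bar M_t$ is a continuous local martingale with quadratic variation $\int_0^t \Sigma(\bar X_s, s) \d s$, and then invoke the martingale representation theorem (using a Moore--Penrose pseudoinverse of $\sigma$ where it is rank-deficient, which is relevant for Kunita flows with many landmarks) to produce $\bar W$ such that $\bar M_t = \int_0^t \sigma(\bar X_s, s) \d \bar W_s$. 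The \emph{mild regularity conditions} cited in the statement are precisely what is needed to make both halves of this argument rigorous: enough smoothness and strict positivity of $p$ to justify the PDE manipulations, and enough integrability and non-explosion of the coefficients to justify the martingale construction.
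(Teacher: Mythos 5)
The paper offers no proof of this theorem at all --- it is quoted from Anderson (1982), with the rigorous version due to Haussmann and Pardoux --- and your outline is exactly that classical route: identify a candidate reverse drift by rewriting the Fokker--Planck equation, then upgrade to an SDE representation via a martingale argument in the backward filtration. So strategically you are aligned with the source the paper leans on, and you correctly locate where the ``mild regularity conditions'' enter (smoothness/positivity of $p$ for the PDE step, integrability and the semimartingale construction for the martingale step).

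Two slips in the write-up need fixing, because as written the coefficient matching does \emph{not} produce the displayed drift. First, with your convention $\bar X_t := X_{T-t}$, matching the rewritten equation against the Fokker--Planck form gives $\bar f(x,t) = -f(x,T-t) + \nabla\cdot\Sigma(x,T-t) + \Sigma(x,T-t)\nabla\log p(x,T-t)$, i.e.\ the \emph{negative} of the theorem's drift evaluated at reversed time; the stated form $f - \nabla\cdot\Sigma - \Sigma\nabla\log p$ is Anderson's convention, in which the reversed dynamics keep the original time variable and are integrated backwards with a backward Wiener process, so you must either adopt that convention from the start or negate and re-index at the end. Second, under your definition $\bar X_0 = X_T$, so the density in the score term cannot be ``the transition density of the forward process started from $\bar X_0$''; it must be conditioned on the forward initial point $X_0 = \bar X_T$ (which is what the paper's loose notation $\bar X_0$ actually refers to). Finally, note that the Fokker--Planck matching only determines $\bar f$ up to a field $v$ with $\nabla\cdot(v\bar p)=0$, and you cannot apply It\^o calculus to $\bar X$ in the reverse filtration before knowing it is a semimartingale there; the Haussmann--Pardoux computation of $\expectation[\bar X_{t+h}-\bar X_t \mid \bar{\mathcal F}_t]$ through the transition density is precisely what resolves both points, so your martingale step is the right idea but is the crux of the proof rather than a routine verification.
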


Note that the transition density $p$ is from the original, forward-time process, $X$. This result is due to \textcite{ANDERSON1982313}.

The crucial result comes from combining these theorems.

\begin{corollary}[Reverse Time Bridge Process] \label{cor:reverse_bridge}
    Given a diffusion process, $X_t$, admitting \ito{} form SDE
    \begin{equation}
        \d X_t = f(t, X_t) \d t + \sigma(t, X_t) \d W_t,
    \end{equation}
    applying Doob's $h$-transform (Theorem \ref{thm:doobs-h-transform}) and \citeauthor{ANDERSON1982313}'s time reversal (Theorem \ref{thm:time-reversal}) to $X_t$, in that order, yields a \emph{reverse time diffusion bridge}\footnotemark{} with \ito{} form SDE
    \begin{equation}
        \d \bar X_t^\star = \bar f^\star(\bar X_t^\star, t) \d t + \sigma(\bar X_t^\star, t) \d \bar W_t^\star
    \end{equation}
    with a new Wiener process $\bar W_t^\star$, the same diffusion term $\sigma$, and a new drift term
    \begin{equation}
        \begin{split}
            \bar f^\star(\bar X_t^\star, t) = f(\bar X_t^\star, t) &- \Sigma(\bar X_t^\star, t) \nabla \log p(\bar X_t^\star \given \bar X_0^\star) \\
            &- \nabla \cdot \Sigma(\bar X_t^\star, t).
        \end{split}
    \end{equation}
    The transition density, $p$, is that of the original, unconditioned process, $X_t$.
    \footnotetext{The order matters here; this is the time reversal of a diffusion bridge --- i.e., a diffusion bridge running backwards in time.}
\end{corollary}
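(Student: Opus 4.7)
The plan is to apply the two named theorems in sequence and observe that the drift contribution from Doob's $h$-transform is annihilated by part of the drift contribution arising in Anderson's time reversal. Concretely, I would first invoke Theorem~\ref{thm:doobs-h-transform} on $X_t$ to obtain the forward-time bridge $X_t^\star$ with drift $f^\star = f + \Sigma \nabla \log p(X_T^\star \given X_t^\star)$ and the same diffusion $\sigma$. I would then apply Theorem~\ref{thm:time-reversal} to $X_t^\star$ and take care to note that the transition density appearing in Anderson's formula must be the one of the bridge process, $p^\star$, not of the original unconditioned process.

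The second step is to express $\nabla \log p^\star$ in terms of $\nabla \log p$. By Bayes' rule, for the bridge conditioned on $X_T^\star = v$ and started from $X_0^\star = u$,
\begin{equation*}
    p^\star(x, t \given u, 0; v) \;=\; \frac{p(x, t \given u, 0)\, p(v, T \given x, t)}{p(v, T \given u, 0)}.
\end{equation*}
The denominator is constant in $x$, so differentiating the logarithm with respect to $x$ yields
\begin{equation*}
    \nabla_x \log p^\star(x, t \given u, 0; v) \;=\; \nabla_x \log p(x, t \given u, 0) \;+\; \nabla_x \log p(v, T \given x, t).
\end{equation*}

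The third step is algebraic. Substituting $f^\star$ and the decomposition above into Anderson's formula applied to $X_t^\star$ gives
\begin{equation*}
    \bar f^\star \;=\; f + \Sigma \nabla \log p(v \given \cdot) - \nabla \cdot \Sigma - \Sigma\bigl[\nabla \log p(\cdot \given u) + \nabla \log p(v \given \cdot)\bigr],
\end{equation*}
where the two $\Sigma \nabla \log p(v \given \cdot)$ terms cancel. What remains is precisely the claimed expression, with $\bar X_0^\star$ interpreted (consistently with the convention of Theorem~\ref{thm:time-reversal}) as the initial point of the underlying forward process, i.e.\ the terminal point of the reverse-time bridge. The diffusion term is preserved because neither Doob's $h$-transform nor Anderson's reversal modifies $\sigma$.

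I do not expect any genuinely hard calculation: the regularity conditions are inherited from those required by Theorems~\ref{thm:doobs-h-transform} and \ref{thm:time-reversal}, and the cancellation is immediate once Bayes has been applied. The main obstacle is purely notational bookkeeping: one must consistently track which process each transition density refers to (bridge vs.\ unconditioned) and which endpoint plays the role of ``initial'' after the time reversal, since confusing these conventions would obscure the cancellation that drives the result.
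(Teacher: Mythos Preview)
Your proposal is correct. The paper does not actually supply a proof for this corollary at all: it is stated as an immediate consequence of combining Theorems~\ref{thm:doobs-h-transform} and~\ref{thm:time-reversal}, and the text moves on directly. Your argument---apply Doob to obtain $f^\star$, apply Anderson to the bridge (so that $p^\star$, not $p$, appears), then use Bayes to split $\nabla\log p^\star$ and observe the cancellation of the $\Sigma\nabla\log p(v\mid\cdot)$ terms---is precisely the computation that justifies the stated drift and explains why only the score of the \emph{unconditioned} transition density survives. In that sense you are supplying the missing verification rather than taking a different route; the only point to keep tidy is the one you already flag, namely that $\bar X_0^\star$ in the final formula denotes the forward-time initial point $u$, consistent with the convention of Theorem~\ref{thm:time-reversal}.
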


Vitally, the gradient of log-transition density --- often denoted the \emph{score} ---  is the only unknown part here.

\subsection{Approximation}
\textcite{heng22} shows how the score may be approximated by \emph{score matching} using the objective function
\begin{equation}
    \frac{1}{2} \expectation_{\omega_t \sim \bb P^{x_0}} \left[ \int_0^T \| s_\phi(t, \omega_t) - s(t, \omega_t) \|_{\Sigma(t, \omega_t)}^2 \d t \right]
\end{equation}
where $s_\phi$ is the learned approximator for the true (unknown) score $s$. This expression may be approximated in the usual Monte Carlo fashion utilising time discretisations such as the Euler-Maruyama scheme.

With
\begin{align}
    p_i^j &= s_\phi(\tau_i, Y_{i+1}^j) \\
    v_i^j &= Y_{i+1}^j - Y_i^j - f(\tau_i, Y_i^j) \Delta t \\
    \Sigma_i^j &= \Delta t \Sigma(\tau_i, Y_i^j)
\end{align}
for notational brevity, the approximation becomes
\begin{equation}\label{eq:tractable_score_matching}
    \frac{1}{N} \sum_{j=1}^N \sum_{i=1}^M \Delta t \| p_i^j + {\Sigma_i^j}^{-1} v_i^j \|_{\Sigma_i^j}^2
\end{equation}
for $N$ Monte Carlo samples, $M$ time subdivisions, each of length $\Delta t$, for the Euler-Maruyama scheme, which also establishes $v_i^j$ and $\Sigma_i^j$.

This allows reverse time diffusion bridges (conditioned processes) to be approximated using sample paths of the unconditioned process obtainable via Euler-Maruyama simulation. Furthermore, owing to the Markov property of diffusion processes, each increment of such a sampled path may be utilised as i.i.d.\ samples to learn from. As the objective function is differentiable, modern deep learning may be employed as function approximators.

\subsection{Numerical Stability}\label{sec:bridge:stable}
\textcite{heng22} shows that the score matching objective function of \eqref{eq:tractable_score_matching} works well in many circumstances. However, for the morphometry processes examined in this work, where the covariance matrix $\Sigma$ is large and numerically problematically near singular, the dependence on its inverse is limiting. However, as the following result shows, the matrix inversions can be entirely circumvented.

\begin{theorem}[Numerically Stable Equivalent Objective Function]\label{thm:stable_loss}
    For vectors $p, v \in \bb R^d$, and a symmetric positive definite matrix $\Sigma \in \bb R^{d \times d}$, it holds that
    \begin{equation}
        \| p + \Sigma^{-1} v \|_\Sigma^2 = \|p\|_\Sigma^2 + 2 p^\top v + c
    \end{equation}
    where $c$ is some constant, which is independent of $p$.
\end{theorem}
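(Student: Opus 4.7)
The approach will be a direct algebraic expansion. Since $\Sigma$ is symmetric positive definite, the notation $\|x\|_\Sigma^2$ is shorthand for the quadratic form $x^\top \Sigma x$. The statement is therefore purely linear-algebraic; there is no real obstacle, only bookkeeping, and the key insight is that the cross-term collapses cleanly because $\Sigma$ and $\Sigma^{-1}$ cancel against each other, and the remaining $p$-free quadratic term can be absorbed into the constant $c$.

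First I would write out
\begin{equation*}
\|p + \Sigma^{-1} v\|_\Sigma^2 = (p + \Sigma^{-1} v)^\top \Sigma (p + \Sigma^{-1} v)
\end{equation*}
and distribute. The first term is $p^\top \Sigma p = \|p\|_\Sigma^2$. The two cross-terms $p^\top \Sigma (\Sigma^{-1} v)$ and $(\Sigma^{-1} v)^\top \Sigma p$ each simplify to $p^\top v$ (using symmetry of $\Sigma$ so that $\Sigma^{-\top} = \Sigma^{-1}$), summing to $2 p^\top v$. The final term is $(\Sigma^{-1} v)^\top \Sigma (\Sigma^{-1} v) = v^\top \Sigma^{-1} v$.

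Collecting gives
\begin{equation*}
\|p + \Sigma^{-1} v\|_\Sigma^2 = \|p\|_\Sigma^2 + 2 p^\top v + v^\top \Sigma^{-1} v,
\end{equation*}
so the identity holds with $c = v^\top \Sigma^{-1} v$. Since $v$ and $\Sigma$ do not involve $p$, neither does $c$, which is what the theorem asserts.

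I would close with the remark that makes this useful in context: in the score matching objective \eqref{eq:tractable_score_matching}, only $p = s_\phi$ depends on the learnable parameters $\phi$, while $v$ and $\Sigma$ are fixed by the sampled Euler--Maruyama path. Thus $c$ contributes a term constant in $\phi$ that can be dropped from the loss, and the remaining expression $\|p\|_\Sigma^2 + 2 p^\top v$ requires neither an inversion nor a determinant of $\Sigma$, which is precisely the numerical benefit advertised in Section~\ref{sec:bridge:stable}.
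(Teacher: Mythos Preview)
Your proof is correct and follows essentially the same direct expansion as the paper: both write out $(p+\Sigma^{-1}v)^\top\Sigma(p+\Sigma^{-1}v)$, use symmetry of $\Sigma$ to collapse the cross-terms to $2p^\top v$, and identify the residual $v^\top\Sigma^{-1}v = \|v\|_{\Sigma^{-1}}^2$ as the $p$-independent constant. Your closing remark about why the constant may be dropped in the optimisation over $\phi$ is a helpful addition not spelled out in the paper's proof itself.
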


\begin{proof}
    Recall that positive definiteness suffices for invertibility, and that inversion preserves symmetry. By definition of weighted norms and repeated use of standard linear algebra operations, it follows that
    \begin{align}
        &\| p + \Sigma^{-1} v \|_\Sigma^2 \\
        ={} &(p + \Sigma^{-1} v)^\top \Sigma (p + \Sigma^{-1} v) \\
        ={} &p^\top \Sigma (p + \Sigma^{-1} v) + (\Sigma^{-1} v)^\top \Sigma (p + \Sigma^{-1} v) \\
        \begin{split}
        ={} &p^\top \Sigma p + p^\top \Sigma \Sigma^{-1} v + (\Sigma^{-1} v)^\top \Sigma p \\
            &\qquad + (\Sigma^{-1} v)^\top \Sigma \Sigma^{-1} v
        \end{split} \\
        ={} &p^\top \Sigma p + p^\top v + v^\top \Sigma^{-\top} \Sigma p + v^\top \Sigma^{-1} v \\
        ={} &p^\top \Sigma p + p^\top v + v^\top p + v^\top \Sigma^{-1} v \\
        ={} &\|p\|_\Sigma^2 + 2 p^\top v + \|v\|_{\Sigma^{-1}}^2
    \end{align}
    where the only term involving $\Sigma^{-1}$ crucially does not involve $p$; hence the claim.
\end{proof}

Thus, courtesy of Corollary~\ref{cor:reverse_bridge} and Theorem~\ref{thm:stable_loss}, using the objective function
\begin{equation}
    \frac{1}{N} \sum_{j=1}^N \sum_{i=1}^M \Delta t \big( \|p_i^j\|_{\Sigma_i^j}^2 + 2 {p_i^j}^\top v_i^j \big)
\end{equation}
in a gradient descent-based score matching routine will provide an approximator that may be used to simulate reverse time diffusion bridges.

\section{LIKELIHOOD ESTIMATION}
Because the transition density $p(X_{t_1} \given X_{t_0})$ is unavailable in closed form, we cannot directly estimate parameters using the likelihood. \textcite{500d2f1b-d6f7-3edc-8f25-66b3ad9b7d1d} introduce \emph{simulated likelihood estimation}, one such approximation scheme, complete with asymptotic  consistency results (\cite{ae19f427-d110-3aee-8cbb-d096fa1e809e}). Inspired by \textcite{doi:10.1198/jasa.2010.tm09057}, this paper proposes an extension of simulated likelihood estimation using diffusion bridges as proposals in an importance sampler.

\begin{proposition}[Importance Sampled Simulated Likelihood Estimation]\label{prop:importance_sml}
    For a diffusion process
    \begin{equation}
        \d X_t = f(X_t) \d t + \sigma(X_t) \d W_t, \quad X_{t_0} = x_{t_0}
    \end{equation}
    let
    \begin{equation}
        \d X_t^\star = f^\star(X_t^\star) \d t + \sigma^\star(X_t^\star) \d W_t^\star, \quad X_{t_0}^\star = X_{t_0}
    \end{equation}
    be the diffusion bridge process conditioned to end at $X_{t_1}$ at time $t_1$, having (transition) density $p^\star$. For a sequence of time steps
    \begin{equation}
        t_0 = \tau_0 < \tau_1 < \dots < \tau_{M-1} = t_z < \tau_M = t_1
    \end{equation}
    the transition density $p(X_{t_1} \given X_{t_0})$ obeys the identity
    \begin{equation}\label{eq:importance_sml}
        \begin{split}
            &p(X_{t_1} \given X_{t_0}) \\
            &=\expectation \left[ p(X_{\tau_M} \given X_{\tau_{M-1}}) \frac{\displaystyle \prod_{i=1}^{M-1} p(X_{\tau_i} \given X_{\tau_{i-1}})}{\displaystyle \prod_{i=1}^{M-1} p^\star(X_{\tau_i} \given X_{\tau_{i-1}})} \right]
        \end{split}
    \end{equation}
    where the expectation is with regards to $(X_{\tau_1}, \dots, X_{\tau_{M-1}}) \sim p^\star(\cdot \given X_{\tau_0})$.
\end{proposition}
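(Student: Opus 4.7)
The strategy is to expand $p(X_{t_1} \mid X_{t_0})$ along the partition $\tau_0, \dots, \tau_M$ using Chapman--Kolmogorov, and then convert the resulting multi-dimensional integral into an expectation under the bridge law by multiplying and dividing by the bridge's joint transition density. No deep machinery is needed; the content of the proposition is essentially the standard importance-sampling identity, with the wrinkle that the terminal time under the proposal is degenerate.

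First, I would invoke the Markov property of $X_t$ applied iteratively along the partition to obtain
\begin{equation}
    p(X_{t_1} \mid X_{t_0}) = \int \prod_{i=1}^{M} p(X_{\tau_i} \mid X_{\tau_{i-1}}) \d X_{\tau_1} \cdots \d X_{\tau_{M-1}},
\end{equation}
i.e., the usual telescoping of the forward transition density. Next, I would multiply and divide the integrand by $\prod_{i=1}^{M-1} p^\star(X_{\tau_i} \mid X_{\tau_{i-1}})$. By Doob's $h$-transform (Theorem~\ref{thm:doobs-h-transform}) the bridge $X^\star$ is itself a diffusion and hence Markov, so this product is precisely the joint density of the intermediate states $(X_{\tau_1}, \dots, X_{\tau_{M-1}})$ under $p^\star(\cdot \mid X_{\tau_0})$. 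Recognising the integral against this joint density as an expectation yields
\begin{equation}
    p(X_{t_1} \mid X_{t_0}) = \expectation\!\left[ p(X_{\tau_M} \mid X_{\tau_{M-1}}) \frac{\displaystyle \prod_{i=1}^{M-1} p(X_{\tau_i} \mid X_{\tau_{i-1}})}{\displaystyle \prod_{i=1}^{M-1} p^\star(X_{\tau_i} \mid X_{\tau_{i-1}})} \right],
\end{equation}
which is the claim.

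The main subtlety, and the only step requiring care, is the asymmetric treatment of $\tau_M = t_1$: the bridge is conditioned to reach the fixed value $X_{t_1}$, so under $p^\star$ the terminal state is deterministic and carries no density factor. This is precisely why the last transition $p(X_{\tau_M} \mid X_{\tau_{M-1}})$ sits outside the importance ratio rather than being paired with a $p^\star$ in the denominator. Implicit in the argument is also the absolute-continuity requirement --- that the bridge transition densities $p^\star(X_{\tau_i} \mid X_{\tau_{i-1}})$ dominate the forward ones on the integration domain, so the ratio is well-defined --- which follows from both processes sharing the diffusion coefficient $\sigma$ and differing only in drift.
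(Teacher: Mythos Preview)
Your proposal is correct and follows essentially the same route as the paper's proof: Chapman--Kolmogorov plus Markov factorisation, multiply and divide by the bridge transition densities, then recognise the expectation under $p^\star$. Your added remarks on the asymmetric handling of the terminal time and the absolute-continuity requirement are sound elaborations that the paper leaves implicit.
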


Assuming access to a sampler of $p^\star$, this quantity may be estimated in the usual Monte Carlo fashion with
\begin{align}
    &p(X_{\tau_i} \given X_{\tau_{i-1}}) \\
    \approx{} &\tilde p(X_{\tau_i} \given X_{\tau_{i-1}}) \\
    ={} &\c N\big(X_{\tau_i};\; X_{\tau_{i-1}} + f(X_{\tau_{i-1}}) \Delta_i, \Sigma(X_{\tau_{i-1}}) \Delta_i \big) \\\\
    &p^\star(X_{\tau_i} \given X_{\tau_{i-1}}) \\
    \approx{} &\tilde p^\star(X_{\tau_i} \given X_{\tau_{i-1}}) \\
    ={} &\c N\big(X_{\tau_i};\; X_{\tau_{i-1}} + f^\star(X_{\tau_{i-1}}) \Delta_i, \Sigma^\star(X_{\tau_{i-1}}) \Delta_i \big)
\end{align}
where $\Delta_i = (\tau_i - \tau_{i-1})$,
which, by arguments similar to the Euler-Maruyama simulation scheme, become reasonable approximations when $\tau_i - \tau_{i-1}$ are small; equivalently, when $M$ is large.

\begin{proof}
    Following standard importance sampling estimator methodology, it holds that
    \begin{align}
        &p(X_{t_1} \given X_{t_0}) \\
        \eq={} &\int p(X_{\tau_1}, X_{\tau_2}, \dots, \underbrace{X_{\tau_M}}_{=X_{t_1}} \given \underbrace{X_{\tau_0}}_{=X_{t_0}}) \d (X_{\tau_1}, \dots, X_{\tau_{M-1}}) \\
        \begin{split}
        \eq={} &\int p(X_{\tau_M} \given X_{\tau_{M-1}}) \prod_{i=1}^{M-1} p(X_{\tau_i} \given X_{\tau_{i-1}}) \\
            &\qquad \d (X_{\tau_1}, \dots, X_{\tau_{M-1}})
        \end{split} \\
        \begin{split}
        \eq={} &\int p(X_{\tau_M} \given X_{\tau_{M-1}}) \prod_{i=1}^{M-1} p(X_{\tau_i} \given X_{\tau_{i-1}}) \\
            &\qquad \frac{\displaystyle \prod_{i=1}^{M-1} p^\star(X_{\tau_i} \given X_{\tau_{i-1}})}{\displaystyle \prod_{i=1}^{M-1} p^\star(X_{\tau_i} \given X_{\tau_{i-1}})} \d (X_{\tau_1}, \dots, X_{\tau_{M-1}})
        \end{split} \\
        \begin{split}
        \eq={} &\int p(X_{\tau_M} \given X_{\tau_{M-1}}) \frac{\displaystyle \prod_{i=1}^{M-1} p(X_{\tau_i} \given X_{\tau_{i-1}})}{\displaystyle \prod_{i=1}^{M-1} p^\star(X_{\tau_i} \given X_{\tau_{i-1}})} \\
            &\qquad p^\star(X_{\tau_1}, \dots, X_{\tau_{M-1}} \given X_{\tau_0}) \d (X_{\tau_1}, \dots, X_{\tau_{M-1}})
        \end{split} \\
        \eq={} &\expectation \left[ p(X_{\tau_M} \given X_{\tau_{M-1}}) \frac{\displaystyle \prod_{i=1}^{M-1} p(X_{\tau_i} \given X_{\tau_{i-1}})}{\displaystyle \prod_{i=1}^{M-1} p^\star(X_{\tau_i} \given X_{\tau_{i-1}})} \right]
    \end{align}
    where
    \begin{enumerate}
        \item is by the Chapman-Kolmogorov equations,
        \item utilises Markov factorisation,
        \item multiplies by the neutral element, introducing new densities following standard importance sampling practice,
        \item rearranges terms and utilises Markov factorisation (in reverse), and
        \item recognises the expectation with regards to $(X_{\tau_1}, \dots, X_{\tau_{M-1}}) \sim p^\star(\cdot \given X_{\tau_0})$.
    \end{enumerate}
\end{proof}

\subsection{Numerical Stability}\label{sec:estimation:stable}
While theoretically sound, this approach contains multiple numerical challenges, particularly for large $M$; increasing $M$ implies decreasing $\Delta_i$, which makes inverting $\Sigma(X_{\tau_i}) \Delta_i$ even more troublesome. Furthermore, products of many small numbers may be problematic, and division of two such small numbers even more so. Fortunately, these numerical instability problems can largely be alleviated or entirely circumvented.

For notational conciseness, let
\begin{align}
    \mu_{\tau_i} &= X_{\tau_{i-1}} + f(X_{\tau_{i-1}}) (\tau_i - \tau_{i-1}) \\
    \mu_{\tau_i}^\star &= X_{\tau_{i-1}} + f^\star(X_{\tau_{i-1}}) (\tau_i - \tau_{i-1}) \\
    \Sigma_{\tau_i} &= \Sigma(X_{\tau_{i-1}}) (\tau_i - \tau_{i-1}) \\
    \Sigma_{\tau_i}^\star &= \Sigma^\star(X_{\tau_{i-1}}) (\tau_i - \tau_{i-1}),
\end{align}
recall that $\Sigma(X_{\tau_i}) = \Sigma^\star(X_{\tau_i})$ for the diffusion bridges in question, and consider the Euler-Maruyama approximation of the logarithm of the fraction in \eqref{eq:importance_sml}
\begin{equation}\label{eq:stable_importance_weight}
    \begin{split}
        &\log \frac{\displaystyle \prod_{i=1}^{M-1} \tilde p(X_{\tau_i} \given X_{\tau_{i-1}})}{\displaystyle \prod_{i=1}^{M-1} \tilde p^\star(X_{\tau_i} \given X_{\tau_{i-1}})} \\
        ={} &\frac{1}{2} \sum_{i=1}^{M-1} \Big( (X_{\tau_i} - \mu^\star)^\top {\Sigma_{\tau_{i-1}}^\star}^{-1} (X_{\tau_i} - \mu^\star) \\
        &\qquad\qquad -(X_{\tau_i} - \mu)^\top {\Sigma_{\tau_{i-1}}}^{-1} (X_{\tau_i} - \mu) \Big).
    \end{split}
\end{equation}
This removed problematic determinant computations, along with divisions of tiny products. Then utilising the numerically stable log-sum-exp trick to approximate $p(X_{t_1} \given X_{t_0})$ by
\begin{equation}
    \logsumexp \big(w^i + \log \tilde p(X_{\tau_M} \given X_{\tau_{M-1}}^i)\big) - \log N
\end{equation}
$i = 1, \dots, N$, with superscripts indicating sample indices, and each $w^i$ computed as in \eqref{eq:stable_importance_weight}, helps.

Furthermore, expressing $\Sigma(X_{\tau_i})$ as $\sigma(X_{\tau_i}) \sigma(X_{\tau_i})^\top$ reveals that
\begin{align}
    &(X_{\tau_i} - \mu_{\tau_i})^\top \big(\sigma_{\tau_i} \sigma_{\tau_i}^\top\big)^{-1} (X_{\tau_i} - \mu_{\tau_i}) \\
    ={} &(X_{\tau_i} - \mu_{\tau_i})^\top \sigma_{\tau_i}^{-\top} \sigma_{\tau_i}^{-1} (X_{\tau_i} - \mu_{\tau_i}) \\
    ={} &\big(\sigma_{\tau_i}^{-1} (X_{\tau_i} - \mu_{\tau_i})\big)^\top \sigma_{\tau_i}^{-1} (X_{\tau_i} - \mu_{\tau_i}) \\
    ={} &z^\top z
\end{align}
with $z = \sigma_{\tau_i}^{-1} (X_{\tau_i} - \mu_{\tau_i})$. This quantity may routinely be found by solving the linear equation
\begin{align}
    \sigma_{\tau_i} z = X_{\tau_i} - \mu_{\tau_i}
\end{align}
for $z$. For greater yet numerical stability, approximate it by a least squares solver.

If applying likelihood estimation for parameter inference, it suffices to be within a constant factor of the likelihood (a constant additive constant off of the log-likelihood, respectively). For variance parameter inference, in particular, this observation may be utilised to further stabilise computations. Recall the log-density of a $k$-dimensional multivariate Gaussian distribution
\begin{equation}
    -\frac{k}{2} \log(2 \pi) - \frac{1}{2} \log \det \Sigma - \frac{1}{2} (x - \mu)^\top \Sigma^{-1} (x - \mu)
\end{equation}
and assume $\Sigma = v \sigma \sigma^\top$, where $\sigma$ is given by some covariance structure and $v$ is the parameter of interest. Then
\begin{align}
    \log \det \Sigma
    &= \log \det (v \sigma \sigma^\top) \\
    &= k \log v + \log \det (\sigma \sigma^\top)
\end{align}
where the latter term is constant in $v$, and
\begin{align}
    &(x - \mu)^\top \Sigma^{-1} (x - \mu) \\
    ={} &(x - \mu)^\top (v \sigma \sigma^\top)^{-1} (x - \mu) \\
    ={} & \frac{1}{v} \big(\sigma^{-1} (x - \mu)\big)^\top \sigma^{-1} (x - \mu) \\
    ={} & \frac{1}{v} z^\top z
\end{align}
which may be computed stably as previously described. Combined,
\begin{equation}
    -\frac{k}{2} \log v - \frac{1}{v} z^\top z
\end{equation}
is thus a constant term off of the log-likelihood in a variance parameter inference search. Notice how this expression involves neither determinant computations nor matrix inversions. And when using this in the log-sum-exp trick expression of the proposed importance sampler, neither will there be any products or divisions of tiny numbers. This range of tricks transforms the importance sampler into a useful estimation scheme for otherwise completely numerically intractable systems.

\section{INFERENCE}
The importance sampler introduced in the previous section depends on sampled diffusion bridges. Knowing the drift and diffusion terms of the conditioned SDEs, such samples can be obtained by numerical solvers. If careful, the entire importance sampling likelihood estimator can be constructed in a differentiable manner. While sampling is non-differentiable, a trick akin to the reparameterisation trick in the variational auto-encoder literature can be employed to nonetheless make the samples differentiable.

As $\sigma \epsilon + \mu \sim \c N(\mu, \sigma \sigma^\top)$ for $\epsilon \sim \c N(0, I)$, an Euler-Maruyama sample path may be obtained by the simple algorithm
\begin{algorithmic}
    \State $\Delta t \gets (t_1 - t_0) / M$
    \State $ws \gets [\c N(0, \sqrt{\Delta t}\, I)$ for $i = 1, \dots, M]$
    \State $ys \gets \texttt{scan}((y, w) \mapsto y + f(y) \Delta t + \sigma(y) w,\; y_0,\; ws)$
\end{algorithmic}
where \texttt{scan} is in the Haskell or \texttt{jax.lax} sense (returns successively reduced values). This sampling scheme additionally proves rather efficient; written in \texttt{jit}'ed \texttt{jax}, it is even faster than e.g.\ the excellent \texttt{diffrax.diffeqsolve} (\cite{kidger2021on}).

\subsection{Variance Parameter Inference}

Viability of parameter inference using the proposed likelihood estimation procedure is illustrated here. For a process mimicking the desired Kunita flows, but for which analytical transition densities are known, consider fixing
\begin{equation}\label{eq:brownian}
    \sigma(X_{t})^{i,j} = k(X_{t_0}^{i,j}, X_{t_0}^{i,j})
\end{equation}
to use the \emph{initial} positions of the landmarks, which corresponds to a Brownian motion with known Gaussian transition densities.

For a challenging baseline, consider 100 landmark outline discretisations of the two butterflies illustrated in Figure~\ref{fig:butterfly_landmarks} as initial ($X_{t_0}, t_0 = 0$) and terminal ($X_{t_1}, t_1 = 1$) shapes in a conditioned zero-drift diffusion process governed by diffusion term given by \eqref{eq:brownian}.


Figure~\ref{fig:baseline:variance} illustrates log-likelihood curves for the variance parameter for this conditioned diffusion process computed by four different methods.

\begin{figure}[t]
    \input{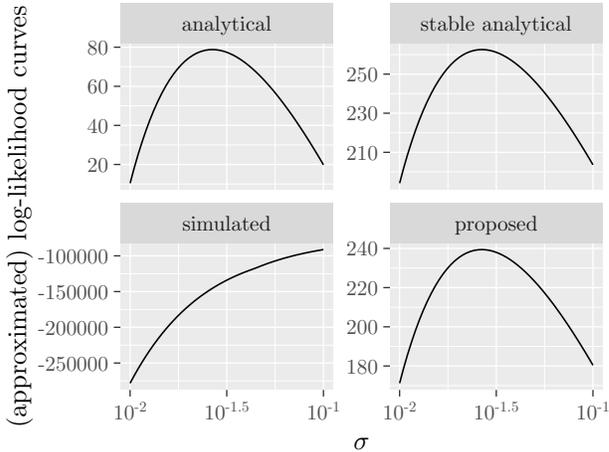}
    \caption{Log-likelihood curves for the conditioned diffusion process evolving landmarks of \emph{Papilio ambrax} into \emph{Papilio slateri}. Method `analytical' is the true log-likelihood, known in this simplified process; `stable analytical' uses the stable but off-by-a-constant computation presented in Section~\ref{sec:estimation:stable}; `simulated' is computed by the simulated likelihood estimation method of \textcite{500d2f1b-d6f7-3edc-8f25-66b3ad9b7d1d}; `proposed' uses the stable importance sampler proposed in this work. The latter two methods use 1000 Monte Carlo samples with 1000 simulation time steps. Notice how the off-by-a-constant proposed method exactly captures the shape of the log-likelihood curve, allowing parameter inference.}
    \label{fig:baseline:variance}
\end{figure}

\subsection{Diffusion Mean}\label{sec:diffusion_mean}
As mentioned, the estimation procedure is fully differentiable. Not only does this allow efficient parameter search, it also allows establishing \emph{diffusion means} (\cite{10.3150/22-BEJ1578}); given a specified diffusion process and a collection of observations, finding the most likely initial points of the process. Choose an arbitrary point, consider the diffusion bridges from that point to each of the observations, and update the diffusion mean estimate by gradient ascent on the sum of the log-likelihoods of those bridges.

Figure~\ref{fig:brownian:diffusion_mean} illustrates diffusion mean estimation. Ten samples from a two-dimensional unit variance zero covariance two-dimensional Brownian motion are sampled. An arbitrary initial guess of the diffusion mean is chosen (red circle), which is subsequently moved by gradient ascent on the joint log-likelihood estimate of the diffusion bridges. The estimate matches the true diffusion mean --- which for Brownian motions is known --- almost perfectly.

\begin{figure}[H]
    \includegraphics[width=\linewidth]{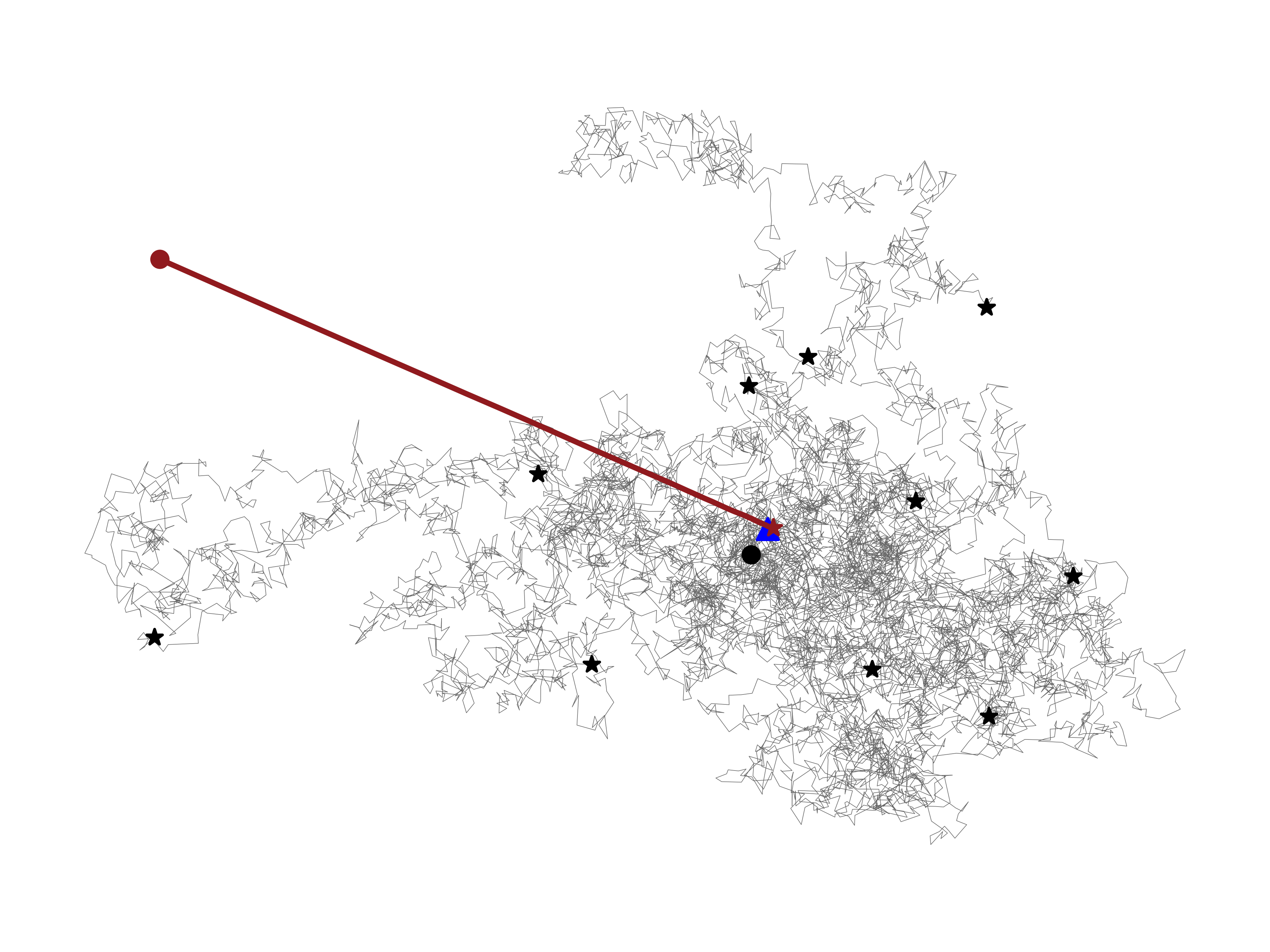}
    \caption{Black stars illustrate observations of sampled Brownian motions initiated at the black circle at the origin. Red circle illustrates initial diffusion mean estimate (chosen arbitrarily). The joint log-likelihood of diffusion bridges from the current diffusion mean estimate to each of the observations is computed and the diffusion mean estimate is updated by gradient ascent on it. The red line shows the path taken by the diffusion mean estimate. Red star illustrates final diffusion mean estimate, which coincides almost fully with the true diffusion mean, indicated by the blue triangle.}
    \label{fig:brownian:diffusion_mean}
\end{figure}

\section{APPLICATIONS}
Two distinct biological applications are presented to exemplify the utility of the proposed methodology. Whereas the previous section, for demonstration purposes, utilised a simplified diffusion process --- a Brownian motion --- for which transition densities and the score are analytically known, this section uses the actual process of interest; Kunita flows.

A neural network is trained using the score matching technique presented in Section~\ref{sec:bridge:stable} to serve as a function approximator of the unknown score. This score approximation is then used in the proposed stable importance sampler.

The structure of the neural network is illustrated in Figure~\ref{fig:network:inverse_u}, exemplified here for handling 100 2-dimensional landmarks for e.g.\ the butterflies. One additional input component is added for the time step.

\begin{figure}[h]
    \centering
    \includegraphics[width=\linewidth]{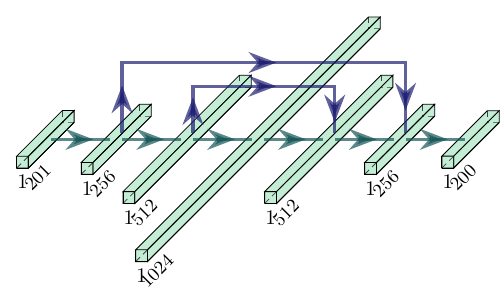}
    \caption{Structure of neural network used, shown here corresponding to 100-point two-dimensional landmark shape; the shape is flattened into a 200-dimensional vector and the time point is concatenated. The dark blue skip connections add the values element-wise to the later layers. These skip connections are found to help the model express the score fields properly.}
    \label{fig:network:inverse_u}
\end{figure}

To make a single network able to approximate scores for a range of variance parameters, the sinusoidal embedding of \textcite{NIPS2017_3f5ee243} is used for normalised log-variance parameters and included as a scale-shift operation on the down-sizing layers.

\subsection{Relationships}
The most likely variance parameter estimate for a diffusion bridge between two observations may informally be viewed as an indicator of similarity between the observations. Given e.g.\ archaeological observations, this similarity indication may provide pointers towards establishing relationships.

To demonstrate this, three observations of parietal bone outlines in canid skulls of known origin are considered; two from distinct specimens of \emph{Canis lupus} (grey wolf) and one from a specimen of \emph{Vulpes vulpes} (red fox). The outline of the parietal bone is chosen for its importance in skull structure and brain protection. Figure~\ref{fig:canidae_bridges} illustrates two of these landmark configurations, along with learned diffusion bridges connecting them with different variance parameters for the assumed Kunita flow.

\begin{figure}[h]
    \centering
    \begin{subfigure}[b]{0.5\linewidth}
        \centering
        \includegraphics[width=\linewidth]{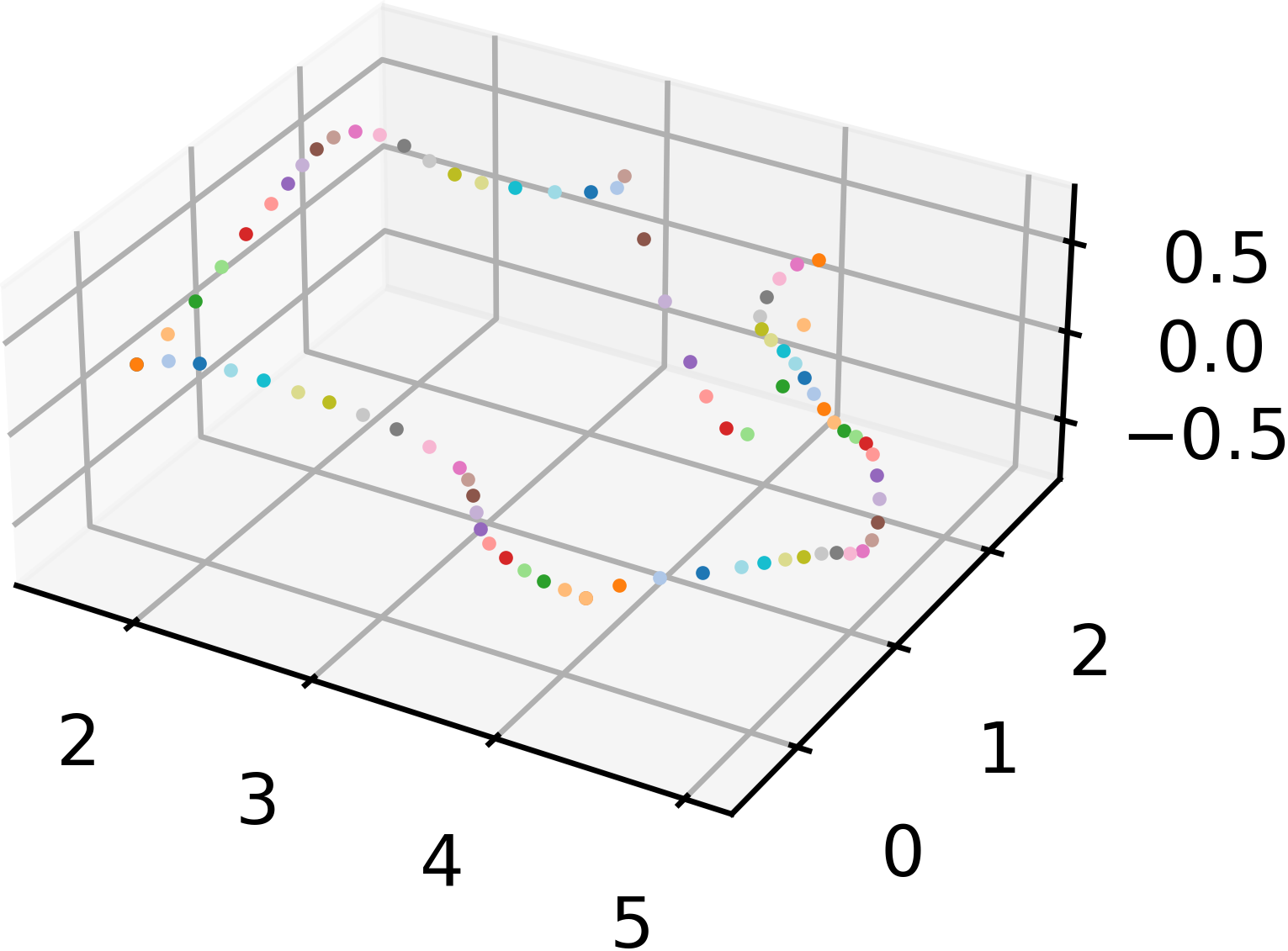}
        \caption{\emph{Canis lupus}}
    \end{subfigure}%
    \begin{subfigure}[b]{0.5\linewidth}
        \centering
        \includegraphics[width=\linewidth]{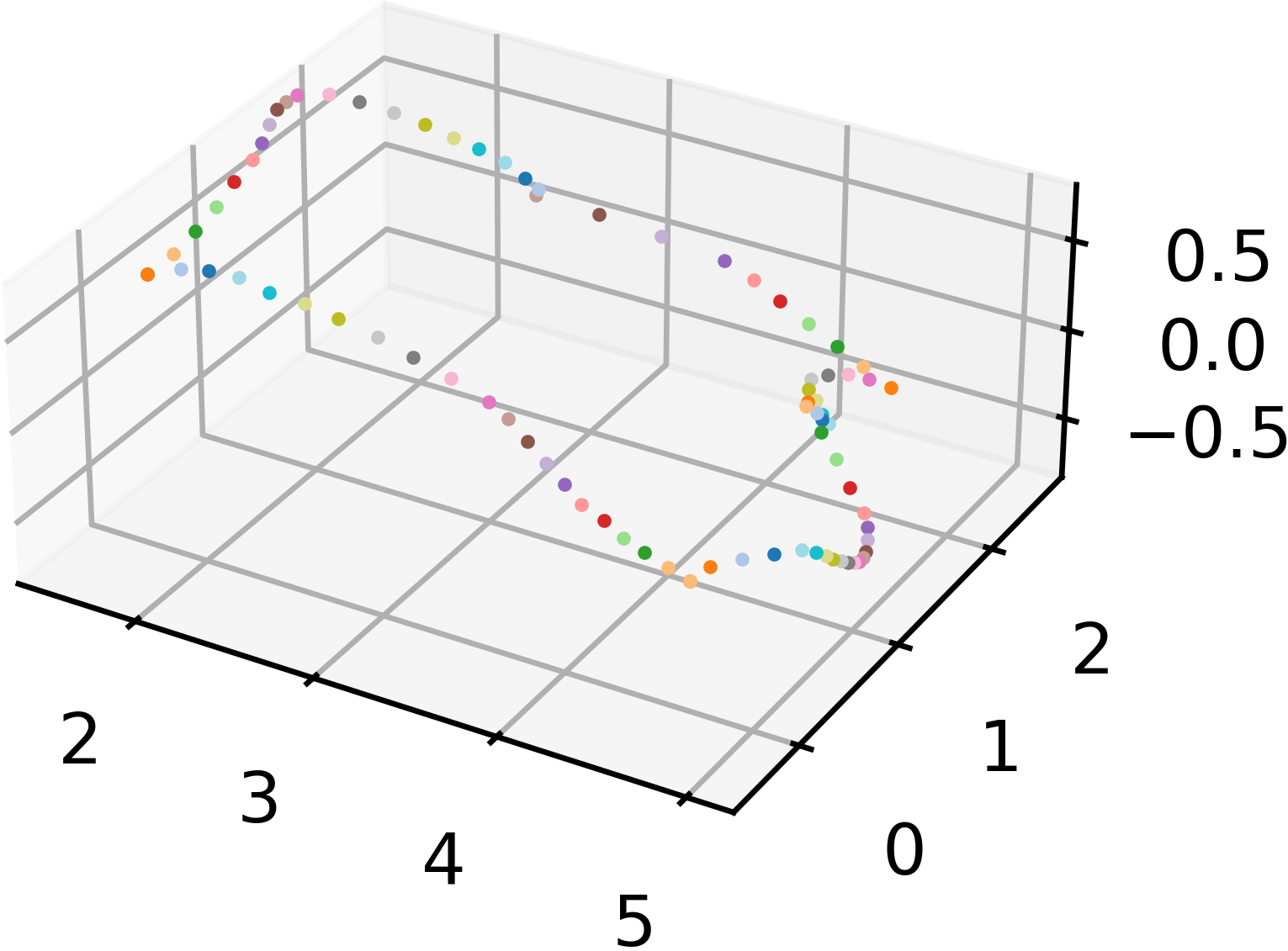}
        \caption{\emph{Vulpes vulpes}}
    \end{subfigure}
    \begin{subfigure}[b]{0.5\linewidth}
        \centering
        \includegraphics[width=\linewidth]{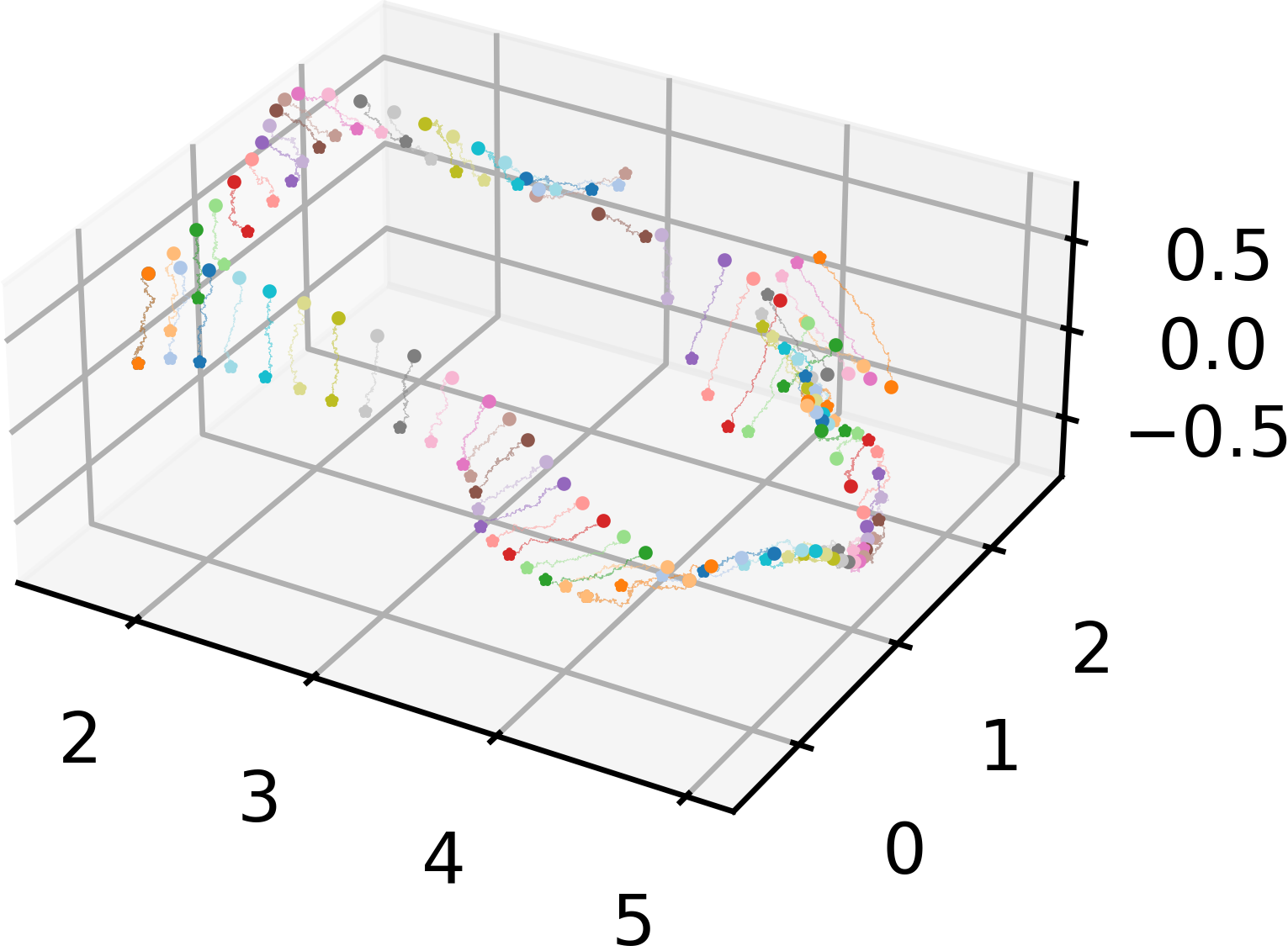}
        \caption{bridge, $\sigma = 0.0464$}
    \end{subfigure}%
    \begin{subfigure}[b]{0.5\linewidth}
        \centering
        \includegraphics[width=\linewidth]{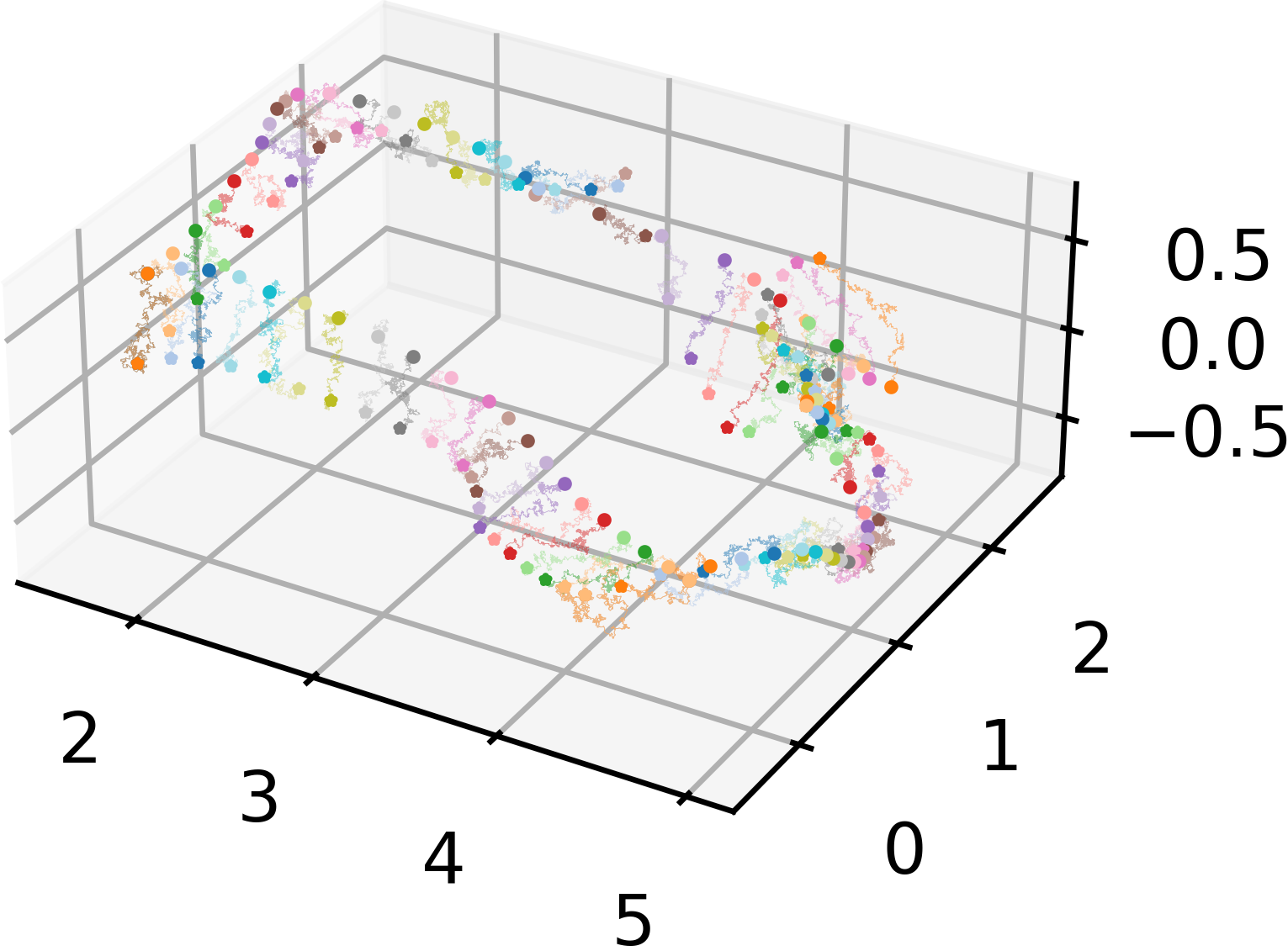}
        \caption{bridge, $\sigma = 0.129$}
    \end{subfigure}
    \caption{(a) and (b) illustrate landmark discretisations of the outline of the parietal bone of \emph{Canis lupus} and \emph{Vulpes vulpes} specimens, respectively. (c) and (d) show learned Kunita flow diffusion bridges between the landmark configurations of (a) and (b), using two different variance parameter values for the process. Data from \textcite{boyer2016morphosource}.}
    \label{fig:canidae_bridges}
\end{figure}

Variance parameters are inferred for the diffusion bridges between the two wolves and between one wolf and the fox, respectively. Figure~\ref{fig:canidae_ll} illustrates the approximated (off-by-a-constant) log-likelihood curves, with most likely value indicated. Unsurprisingly, the inferred most likely variance parameter is larger for the inter-species bridges than for intra-species bridges --- even after performing Procrustes alignment, eliminating size information.

\begin{figure}[H]
    \input{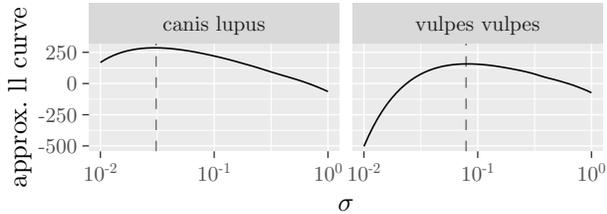}
    \caption{Approximated (off-by-a-constant) log-likelihood curves for the bridges illustrated in Figure~\ref{fig:canidae_bridges}. Dashed vertical lines indicate most likely variance parameter value for each of the bridges, computed using learned score approximators in the proposed importance sampler.}
    \label{fig:canidae_ll}
\end{figure}

Although expected, this result may serve as support for the proposed methodology, exactly because of how biologically obvious it is.

\subsection{Ancestral State}
Section~\ref{sec:diffusion_mean} established estimation of diffusion means using the proposed methodology. For a collection of observations of extant species, their diffusion mean\footnote{Here conceptually generalised to Kunita flows, despite being introduced specifically for Brownian motions in \textcite{10.3150/22-BEJ1578}} may serve as a most likely ancestral state candidate. Conditioning the neural network score approximator on initial state, and training on a distribution of initial states surrounding the collection of observations may allow such ancestral state reconstruction.

To illustrate this, six butterflies of the \emph{Papilio} genus are chosen, with and without swallow tails. One of them is chosen arbitrarily as an initial diffusion mean estimate, and this estimate is then deformed by gradient ascent on the joint log-likelihood of the bridges from it to each observation. Figure~\ref{fig:butterfly_mean_estimation} illustrates the progression of the mean estimate. Curiously, it moves towards the ancestral shape having swallow tails.


\begin{figure}[H]
    \centering
    \begin{subfigure}[b]{0.5\linewidth}
        \centering
        \includegraphics[width=\linewidth]{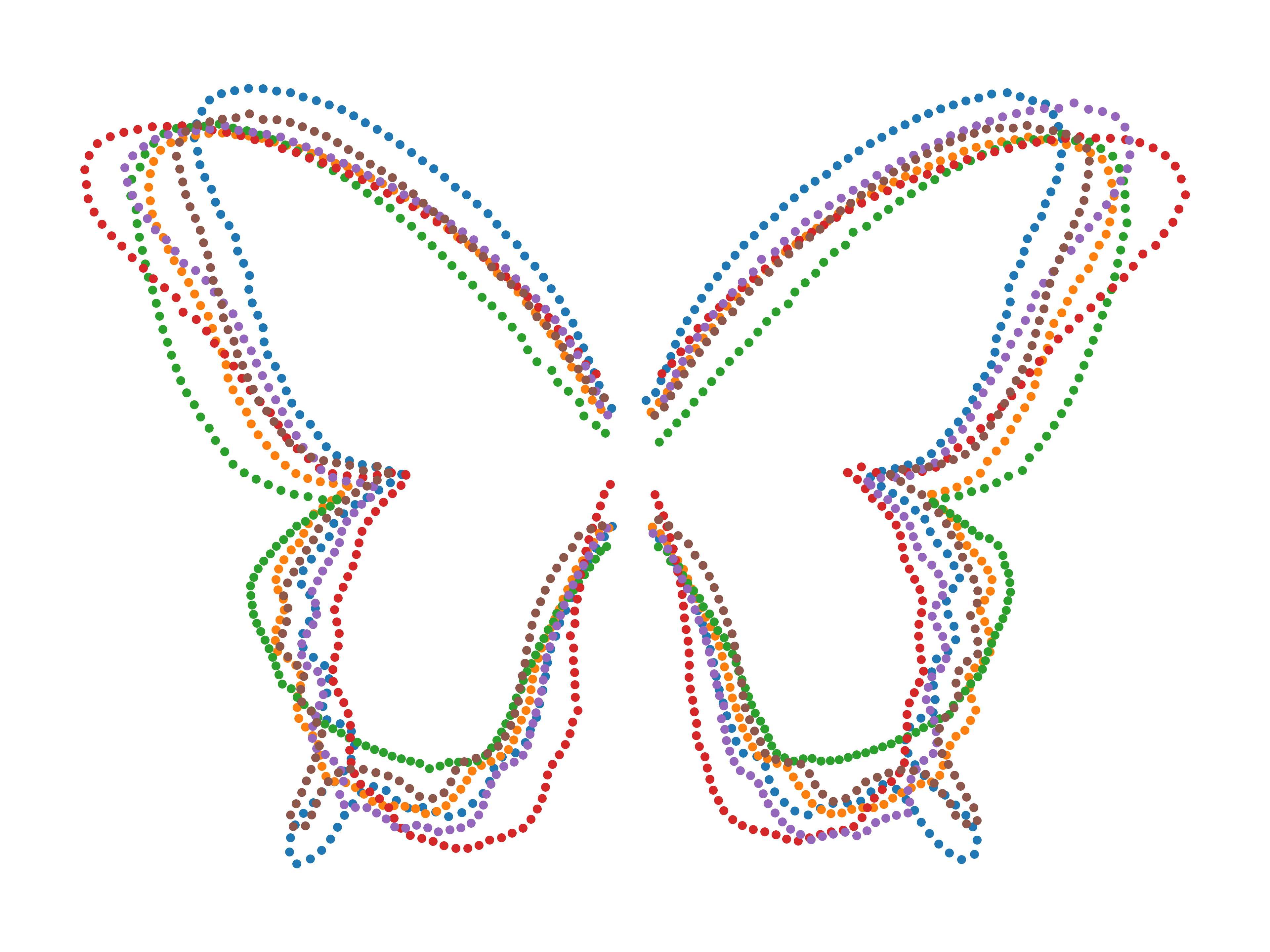}
        \caption{observations}
    \end{subfigure}%
    \begin{subfigure}[b]{0.5\linewidth}
        \centering
        \includegraphics[width=\linewidth]{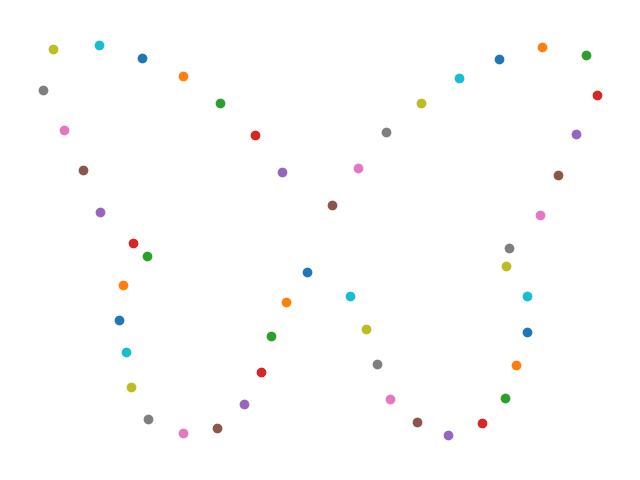}
        \caption{initial estimate}
    \end{subfigure}
    \begin{subfigure}[b]{0.5\linewidth}
        \centering
        \includegraphics[width=\linewidth]{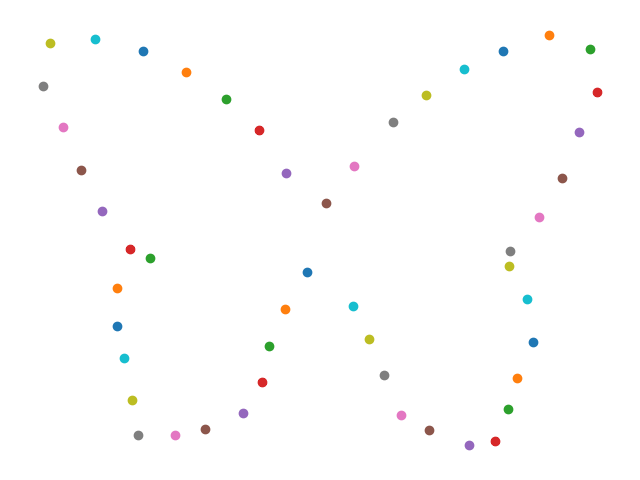}
        \caption{after 150 steps}
    \end{subfigure}%
    \begin{subfigure}[b]{0.5\linewidth}
        \centering
        \includegraphics[width=\linewidth]{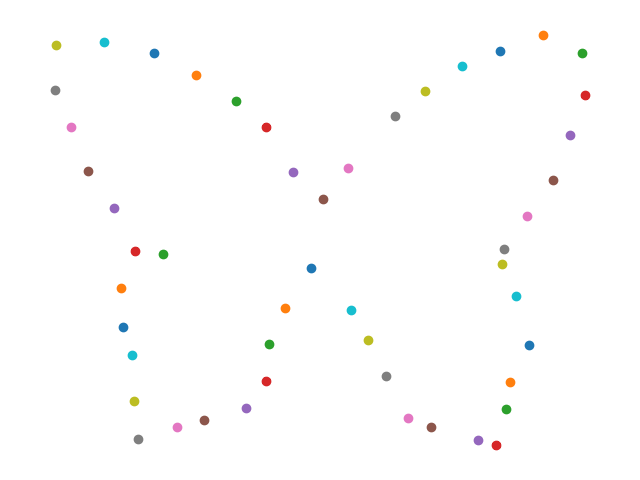}
        \caption{after 300 steps}
    \end{subfigure}
    \caption{Ancestral shape reconstruction of six butterflies of the \emph{Papilio} genus, shown in (a). One of them is chosen as an initial guess of their diffusion mean estimate, shown in (b). This estimate is updated via gradient ascent on the approximated log-likelihood of the diffusion bridges between the estimate and each of the observations. Plots (c) and (d) illustrate the progress of the diffusion mean estimate as an ancestral state reconstruction estimate. Notice how the ancestral state tends towards having swallow tails.}
    \label{fig:butterfly_mean_estimation}
\end{figure}

\section{CONCLUSION}
A differentiable likelihood estimation methodology capable of parameter inference and diffusion mean estimation in challenging, non-linear and high-dimensional diffusion processes has been presented. Powered by a symbiosis between modern deep learning --- score matching via a novel numerically stable objective function --- and theoretical statistics, it has been demonstrated how the proposed methodology is capable of delivering insights in the field of evolutionary biology.

We envision future work could investigate integrating this methodology on phylogenetic trees, as well as making neural networks conditioned on initial state more stable for better diffusion mean estimation.

\clearpage

\subsubsection*{Acknowledgements}
The work presented in this article was done at the Center for Computational Evolutionary Morphometry and is partially supported by the Novo Nordisk Foundation grant NNF18OC0052000 as well as a research grant (VIL40582) from VILLUM FONDEN and UCPH Data+ Strategy 2023 funds for interdisciplinary research.

\subsubsection*{References}

\printbibliography[heading=none]

\paragraph{Data}
Exact sources of the data used in this paper are presented in the table on the following page, which includes source of each species and where it was collected. 

The canid skulls from Bergen were lent by Dr. Hanneke J.M. Meijer,
Associate Professor \& Curator of Osteology,
University Museum of Bergen,
Hanneke.Meijer@uib.no

\onecolumn
\newpage

    \begin{sidewaystable}
    \small
\begin{tabular}{l|lllllll}
    \textbf{type} & \textbf{Class} &\textbf{scientificName}                        & \textbf{locality}                    &                      \textbf{institutionCode} & \textbf{ID} & Sex & Source  \\ \hline \hline
 2d image &Insecta &Papilio ambrax   Boisduval, 1832               & 13 km W of   Kennedy                     &    MCZ    & 89060         & male & Gbif, \cite{gbif_m} \\
 2d image &Insecta &Papilio deiphobus Linnaeus, 1758               & Seram {[}Ceram{]}                         &    MCZ    & 211983        & male & Gbif, \cite{gbif_m}  \\
 2d image &Insecta &Papilio polyxenes asterius Stoll, 1782         & Weston                                    &    MCZ    & 174079        & male & Gbif, \cite{gbif_m}  \\
 2d image &Insecta &Papilio   protenor Cramer, 1775                &    -  &    MCZ    & 180722        & male & Gbif, \cite{gbif_m}  \\
 2d image &Insecta &Papilio polytes Linnaeus, 1758                 & Bacan Batjan, Batchian             &    MCZ    & 170907        & male & Gbif, \cite{gbif_m}  \\
 2d image &Insecta &Papilio slateri                                & North Borneo                              &    MCZ    & 176631        & male & Gbif, \cite{gbif_m}  \\ \hline
 3d scan &Mammal &Canis lupus Linnaeus, 1758                     &  Bergen                                   &    UIB       &  B2           &  -   &   EvoMorphoLab    \\
 3d scan &Mammal &Canis lupus Linnaeus, 1758                     &  Bergen                                   &    UIB       &  2698         & -    &     EvoMorphoLab     \\
 3d scan &Mammal &Vulpes vulpes Linnaeus, 1758                   &   -                                         &    L-ahr  & l-ahr:208049& &  Morphosource, \cite{boyer2016morphosource} \\
\end{tabular}
\caption{Source of each species. L-ahr: Laboratory of Adam Hartstone-Rose, MCZ: Havard Museum of Comparative Zoology, UIB: Bergen University Natural History Museum}
\end{sidewaystable}

\end{document}